\newcommand\blfootnote[1]{%
  \begingroup
  \renewcommand\thefootnote{}\footnote{#1}%
  \addtocounter{footnote}{-1}%
  \endgroup
}
\newcommand{\algmargin}{\the\ALG@thistlm}
\newlength{\whilewidth}
\algnewcommand{\parState}[1]{\State%
  \parbox[t]{\dimexpr\linewidth-\algmargin}{\strut #1\strut}}
\newtheorem{theorem}{Theorem} 
\def\figref#1{figure~\ref{#1}}
\def\tblref#1{table~\ref{#1}}
\def\secref#1{section~\ref{#1}}
\def\eqref#1{equation~\ref{#1}}
\def\Algref#1{Algorithm~\ref{#1}}
\def\1{\bm{1}}
\def\ve{{\bm{e}}}
\def\vx{{\bm{x}}}
\DeclareMathAlphabet{\mathsfit}{\encodingdefault}{\sfdefault}{m}{sl}
\SetMathAlphabet{\mathsfit}{bold}{\encodingdefault}{\sfdefault}{bx}{n}
\DeclareMathOperator*{\argmax}{arg\,max}
\author{Ruohong Zhang  \quad Yau-Shian Wang \quad Yiming Yang \\
    Carnegie Mellon University \\
{\tt ruohongz@andrew.cmu.edu king6101@gmail.com} \\
{\tt yiming@cs.cmu.edu} \\
\texttt{code: \url{https://github.com/RifleZhang/GenCo}}
}
\newcommand{\ourmodel}{\textsc{GenCo}\xspace}
\newcommand{\high}[1]{\textcolor{red}{#1}}
\definecolor{ruohong}{RGB}{101, 66, 243}
\definecolor{yiming}{RGB}{255, 0, 0}
\begin{document}
%
\title{Generation-driven Contrastive Self-training for Zero-shot Text Classification with Instruction-following LLM}
%
%
%
\maketitle              

\begin{abstract}
The remarkable performance of large language models (LLMs) in zero-shot language understanding has garnered significant attention.
However, employing LLMs for large-scale inference or domain-specific fine-tuning requires immense computational resources due to their substantial model size.
To overcome these limitations, we introduce a novel method, namely \ourmodel, which leverages the strong generative power of LLMs to assist in training a smaller and more adaptable language model. 
In our method, an LLM plays an important role in the self-training loop of a smaller model in two important ways. Firstly, the LLM is used to augment each input instance with a variety of possible continuations, enriching its semantic context for better understanding. Secondly, it helps crafting additional high-quality training pairs, by rewriting input texts conditioned on predicted labels. This ensures the generated texts are highly relevant to the predicted labels, alleviating the prediction error during pseudo-labeling, while reducing the dependency on large volumes of unlabeled text.
In our experiments, \ourmodel outperforms previous state-of-the-art methods when only limited ($<5\%$ of original) in-domain text data is available. Notably, our approach surpasses the performance of Alpaca-7B with human prompts, highlighting the potential of leveraging LLM for self-training. \blfootnote{Paper accepted to EACL 2024.}

\end{abstract}
\section{Introduction}
Zero-shot text classification poses a challenge in predicting class labels for text instances without requiring labeled instances for supervised training. Effective solutions to this problem is crucial for many real-world applications, as it diminishes the labor-intensive process of manual labeling. With the remarkable advancements of large language models (LLMs)~\cite{brown2020language,ouyang2022training} in recent years, exploiting the generative capabilities of such models to tackle zero-shot text classification problems has emerged as a critical research question.

Recent research in zero-shot text classification primarily falls into two distinct groups. The first approach applies LLM (with billions of parameters) in label prediction with the help of human instructions or prompts~\cite{ouyang2022training,vicuna2023}. However, even a relatively smaller LLM such as Alpaca-7B~\cite{alpaca} necessitate considerable computational power and time for large-scale inference and model fine-tuning. Without domain-specific fine-tuning, LLMs struggle to discern between classes characterized by unclear decision boundaries. 

The second approach to zero-shot classification involves the self-training of smaller language models, often comparable in size to BERT~\cite{meng2020text, schick2020s,gera2022zero, wang2023pesco}. In these methods, the models predict "pseudo labels" for unlabeled instances, and then use these instances alongside their assigned pseudo labels as supervised data for model fine-tuning. This process is iterated for the model to incrementally adapt to the target domain. However, these techniques hinge on accessing a substantial volume of unlabeled texts from the intended domain, sometimes reaching the magnitude of millions as indicated in \tblref{tab:data_stats}, a volume that may not always be feasible in many practical contexts. Furthermore, due to the capacity limitation of small language models, the pseudo label predictions are prone to error potentially jeopardizing the efficacy of the self-training loops.

In this paper, we introduce a novel approach called \textbf{Gen}eration-driven \textbf{Co}ntrastive Self-Training (\ourmodel). This approach adeptly combines the language understanding ability of LLMs with the adaptability and efficiency of smaller models. Drawing inspiration from PESCO~\cite{wang2023pesco}, we treat zero-shot classification as a sentence alignment task and employ contrastive self-training with smaller models. We provide a theoretical analysis of how self-training can bolster classification generalization. Crucially, we sidestep the dependency on extensive unlabeled texts by capitalizing on the generative strengths of LLMs.


Our approach exploits the LLM generation power in two ways. 
Firstly, to enhance pseudo label prediction, we employ an LLM to generate multiple variations or extensions of an input text. This augmentation strategy enriches the available information for the classifier, enabling it to make better predictions based on a more comprehensive understanding of the input. 
Secondly, we employ the LLM to craft new training instances conditioned on the pseudo labels, ensuring the generated content is closely aligned with its assigned pseudo label. This tackles the prevalent issue of mislabeling in self-training. In summary, this paper makes three key contributions:
\begin{itemize}
 \item We propose a novel approach that enables smaller models to acquire knowledge from LLMs within the self-training loop. Our method is compatible with any new LLMs to effectively train better classifier on target domains. In our experiments, our small model outperforms Alpaca with human instructions.
\item We explore the more challenging setting of zero-shot classification where only a limited number of unlabeled texts are available. In this setting, we improve the performance over strong baselines.
\item We provide theoretical proof to support the effectiveness of the proposed contrastive loss for self-training.
\end{itemize}

\begin{figure*}[t!]
    \centering
    \includegraphics[width=\linewidth]{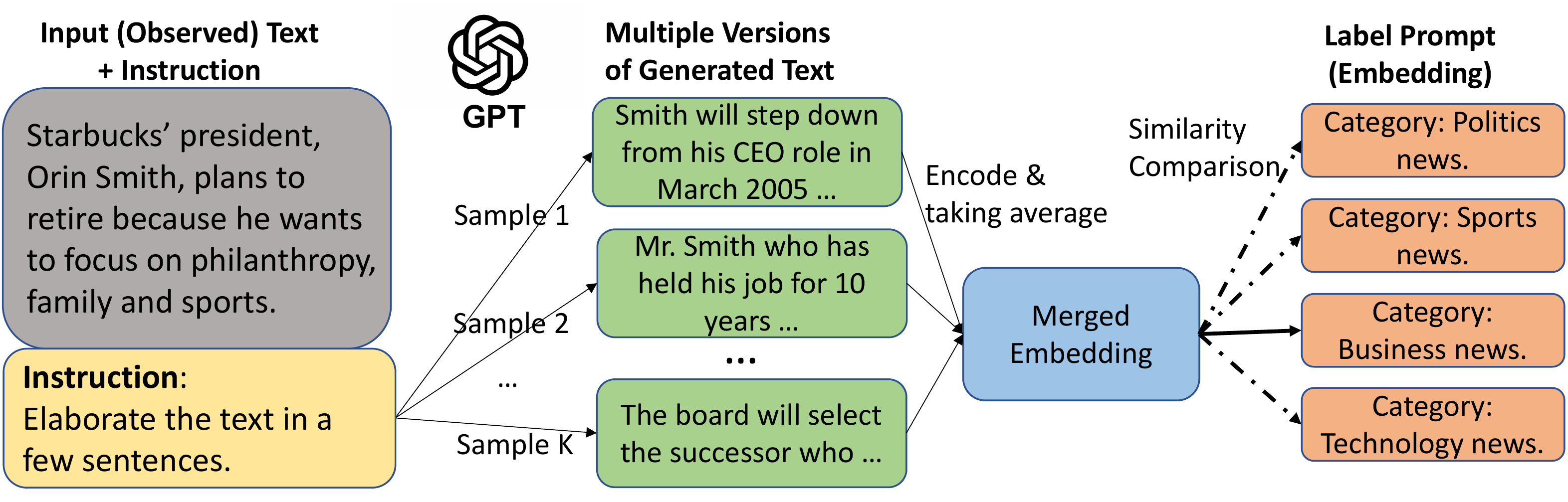}
    \caption{Enriching textual semantics through LLM Generation: The input text and an instruction are fed into the LLM to generate multiple pieces of elaborated texts, each of which is concatenated to the original input to obtain an augmented text. The embeddings of the augmented texts are then averaged to obtain a merged embedding, which is used for label prediction and contrastive loss in the self-training process. }
    \label{fig:general_aug_illustration}
\end{figure*}

\section{Preliminary: Zero-shot Text Classification as Sentence Alignment}
\label{sec:method}
Given a set of $N$ unlabeled documents $X=\{x_1,x_2,\cdots,x_N \}$ and a set of $L$ category descriptions $C=\{c_1,c_2,\cdots,c_L \}$, the goal is to learn a scoring function $g(x,c_i)$ that takes document $x$ and label description $c_i$ as input and produces a similarity score as the measure of how well the document and the label match to each other. 

In the zero-shot setting, text classification can be formulated as a sentence alignment problem~\cite{wang2023pesco}, where both the input sentence and the label descriptions are encoded using a pre-trained sentence encoder like SimCSE~\cite{gao2021simcse}. The similarity scores between the sentence and label embeddings are used to predict related labels. The performance can be further improved by converting a short label description into a full sentence via prompts~\cite{wang2023pesco, hong2022tess}. For example, the label ``sports" can be converted to ``This is an article about sports."
Subsequently, we represent the label prompt for a label description $c_i$ as $p_i$. The scoring function can be implemented as follows:
\begin{equation}
    g(x, c_i)=\operatorname{sim}\left(f_\theta(x), f_\theta(p_i) \right)
\end{equation}
where $f_\theta(\cdot)$ is the sentence encoder parameterized by $\theta$ and $\mathrm{sim}(\cdot,\cdot)$ is a similarity function such as dot product or cosine similarity.

Given an input text at inference time, the predicted  label is the one with the highest similarity score:
\begin{equation}
\hat{y}=\argmax_j g\left(x, c_j\right) \label{eq:inference}
\end{equation}

\section{Our Method: \ourmodel}
\ourmodel is a self-training framework~\cite{meng2020text,schick2021self,wang2023pesco} that harnesses the generative power of LLMs to train a smaller pre-trained sentence encoder in an iterative manner.
Each self-training step consists of two parts. First, we apply \eqref{eq:inference} to predict pseudo labels for unlabeled instances. Second, we fine-tune model on pseudo-labeled data with a proposed contrastive self-training objective. In \secref{subsec:input_augment} and \ref{subsec:condition_aug}, we will introduce two types of augmentation with LLM to enhance the self-training process.

\subsection{Contrastive Self-Training Objective}
\label{subsec:self-training}
One well-known challenge of self-training is its tendency to exhibit overconfidence in certain labels due to the model inductive bias~\cite{xie2016unsupervised}.
Extensive research has shown that soft labeling~\cite{xie2016unsupervised,meng2020text}, label smoothing~\cite{muller2019does}, and entropy regularization~\cite{grandvalet2004semi} can effectively tackle this issue.
Motivated by these, we propose to incorporate soft-labeling and entropy regularization into a contrastive loss. 

Given an input text $x$, the distribution of the predicted label space is:
\begin{equation}
\fontsize{10pt}{12pt}
    P(\hat{y}_i | x ;\theta) =  \frac{\exp(\operatorname{sim}(f_\theta(x), f_\theta(p_i)))}{\sum_{c\in C} \exp(\operatorname{sim}( f_\theta(x), f_\theta(p_c)))} 
\end{equation}
Here, $\hat{y}_i$ is the predicted label and $p_i$ is a label prompt for the predicted label. To prevent the model from being overconfident, we define the weights of the labels as:
\begin{equation}
\fontsize{10pt}{12pt}
    Q(\hat{y}_i | x ;\theta) =   \frac{\exp(\operatorname{sim}(f_\theta(x), f_\theta(p_i))/\tau)}{\sum_{c\in C} \exp(\operatorname{sim}( f_\theta(x), f_\theta(p_c))/\tau)} \label{eq:soft-label}
\end{equation},
where $\tau \leq 1$ is the temperature. A lower temperature implies a sharper distribution and thus greater weights in the predicted label. We drop the notation of $\theta$ for convenience.

Combining the above $P(\hat{y}_i | x)$ and $Q(\hat{y}_i | x )$, we propose a text to label ($t2l$) contrastive loss:
\begin{equation}
    \mathcal{L}_{t2l} = -\sum_{i=1}^{N} \sum_{j=1}^{L} Q(\hat{y}_j|x_i)\log P(\hat{y}_j|x_i) \label{eq:loss}
\end{equation}
When $\tau \rightarrow 0$, $Q(\hat{y} | x)$ becomes categorical distribution and the loss reduces to a supervised contrastive learning loss~\cite{NEURIPS2020_d89a66c7} with pseudo label $\hat{y}$ as the target:
\begin{equation}
    \mathcal{L}_{t2l}^{\tau \rightarrow 0} = -\sum_{i=1}^{N} \log P(\hat{y}|x_i)
\end{equation}
It encourages the model to predict label $\hat{y}$ given $x$ with more confident. 
On the other hand, when $\tau = 1$, the loss reduces to a minimization of conditional entropy function $H$:
\begin{align}
    &\mathcal{L}_{t2l}^{\tau = 1} = H\left(C \mid X \right) \\
    &= - \sum_{i=1}^N\sum_{j=1}^L P(\hat{y}_j|x_i)\log P(\hat{y}_j|x_i)
\end{align}
We show a theorem such that minimizing the loss function \eqref{eq:loss} can achieve similar effects Entropy Regularization~\cite{grandvalet2006entropy,grandvalet2004semi}, which is a means to enforce the cluster assumption such that the decision boundary should lie in low-density regions to improve generalization performance~\cite{chapelle2005semi}. 
\begin{theorem}
\label{theorem:paper}
Consider a binary classification problem with linearly separable labeled examples. When $0<\tau<1$, optimizing \eqref{eq:loss} with gradient descend will enforce the larger margin between classes and achieves max margin classifier under certain constraint. 
\end{theorem}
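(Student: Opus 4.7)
The plan is to reduce the loss in \eqref{eq:loss} to a one-dimensional margin loss per example, verify by direct differentiation that every margin is pushed upward when $0<\tau<1$, analyse the exponential tail of the resulting loss, and finally identify the limiting classifier with the $L^2$ max-margin solution via the standard implicit-bias argument for gradient descent on linearly separable data.

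First I would collapse the problem to a binary margin. For each $x_i$, let $m_i$ denote the $\operatorname{sim}$ score of its pseudo label minus that of the other label; the $\argmax$ pseudo-labeling rule forces $m_i \ge 0$, and with $a := \sigma(m_i/\tau)$, $p := \sigma(m_i)$ the per-example summand of \eqref{eq:loss} becomes $\ell(m_i) = -a\log p - (1-a)\log(1-p)$. Using $\log(\sigma(m)/\sigma(-m)) = m$, a direct differentiation yields
\begin{equation*}
\ell'(m) = -\frac{a(1-a)}{\tau}\, m + \bigl(\sigma(m) - \sigma(m/\tau)\bigr),
\end{equation*}
and for $0<\tau<1$, $m>0$ both summands are strictly negative since $\sigma(m/\tau) > \sigma(m)$. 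Hence gradient descent strictly enlarges every margin; moreover the second term vanishes identically at $\tau=1$, so the margin pressure is strictly stronger at $\tau<1$, which already delivers the ``larger margin'' half of the claim.

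For the max-margin part I would turn to the tail. Expanding $\log\sigma(m) = -e^{-m} + O(e^{-2m})$ and $\sigma(-m/\tau) = e^{-m/\tau}(1+o(1))$ gives $\ell(m) = e^{-m}(1+o(1))$ as $m\to\infty$, since $1/\tau>1$ makes $e^{-m/\tau}$ sub-dominant. Thus $\ell$ shares the exponential tail of the binary logistic loss. Treating each score $\operatorname{sim}(f_\theta(x), f_\theta(p_j))$ as a linear functional of the classifier parameters --- the last-layer regime, or a local linearisation around the current iterate --- places the problem into the scope of Soudry et al.'s implicit-bias result: on linearly separable data, gradient descent on a loss with an $e^{-m}$ tail converges in direction to the $L^2$ max-margin SVM solution, which under the natural norm constraint $\|\theta\|\le R$ coincides with $\argmax_\theta \min_i m_i(\theta)$.

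The main obstacle I anticipate is formalising ``under certain constraint''. The gradient-sign calculation above is fully general and already gives monotone per-example margin growth without any assumption on the encoder, but upgrading it to an SVM identification requires working in a regime in which $m_i(\theta)$ is (effectively) linear in the optimisation variables, together with a constraint --- most naturally $\|\theta\|\le R$, or a norm bound on the final-layer weights that determine $f_\theta(p_j)$ --- that makes the max-margin program well-posed. The exponential tail is robust to these modelling choices, so the qualitative conclusion that choosing $0<\tau<1$ drives the classifier toward a large-margin fixed point is expected to persist in the general encoder case as well.
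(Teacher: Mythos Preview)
Your derivative computation and sign analysis for the margin-growth half of the theorem are correct and essentially identical to the paper's, just written more compactly: the paper works with the raw exponential form of the loss and splits $\partial\mathcal{L}/\partial d_i$ into the same two pieces whose signs it checks case-by-case, while you obtain the single formula $\ell'(m)=-\tfrac{a(1-a)}{\tau}m+(\sigma(m)-\sigma(m/\tau))$ and read off the same conclusion. Both arguments silently treat the margin as the optimisation variable (i.e.\ one step of gradient descent acts coordinate-wise on the $m_i$), so you are at the same level of rigor as the paper here.

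The max-margin half is where you diverge. The paper does \emph{not} invoke the Soudry et al.\ implicit-bias theorem; instead it imposes the artificial constraint $m_i\le B$ directly on the margins, expands the loss near large $B$ as $\log(1+e^{-m^*})+\tfrac{m^* e^{-m^*/\tau}}{1+e^{-m^*/\tau}}\sim e^{-m^*}+m^* e^{-m^*/\tau}$, and argues that the smallest margin dominates, so minimising the loss under $m_i\le B$ forces the minimum margin to saturate the bound. Your route --- establishing the $e^{-m}$ tail and then citing the directional-convergence result under a parameter-norm constraint $\|\theta\|\le R$ --- is a genuinely different argument. It buys you a connection to an established theorem about gradient-descent dynamics (rather than a static loss-landscape statement) at the price of the last-layer linearity assumption you flag; the paper's version is more elementary and needs no external result, but its ``constraint'' is on the margins themselves rather than on the parameters, which is less natural from an optimisation standpoint. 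Both readings of ``under certain constraint'' are defensible given how loosely the theorem is stated.
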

We place our formal theorems and proofs in Appendix \ref{apd:proofs}. Theorem \ref{theorem:1}  suggests that self-training is an in-domain fine-tuning that maximizes class separation, which serves as an explanation of why training on pseudo labels can enhance performance even if no extra labeling information is provided.
In our experiment, we show that self-training of a smaller model can outperform LLM (Alpaca-7B) prediction, justifying the claim empirically. We set $\tau = 0.1$ (refer to Appendix \ref{apd:temperature}) to balance supervised classification and low density separation between classes.


\begin{figure*}[t!]
    \centering
    \includegraphics[width=\linewidth]{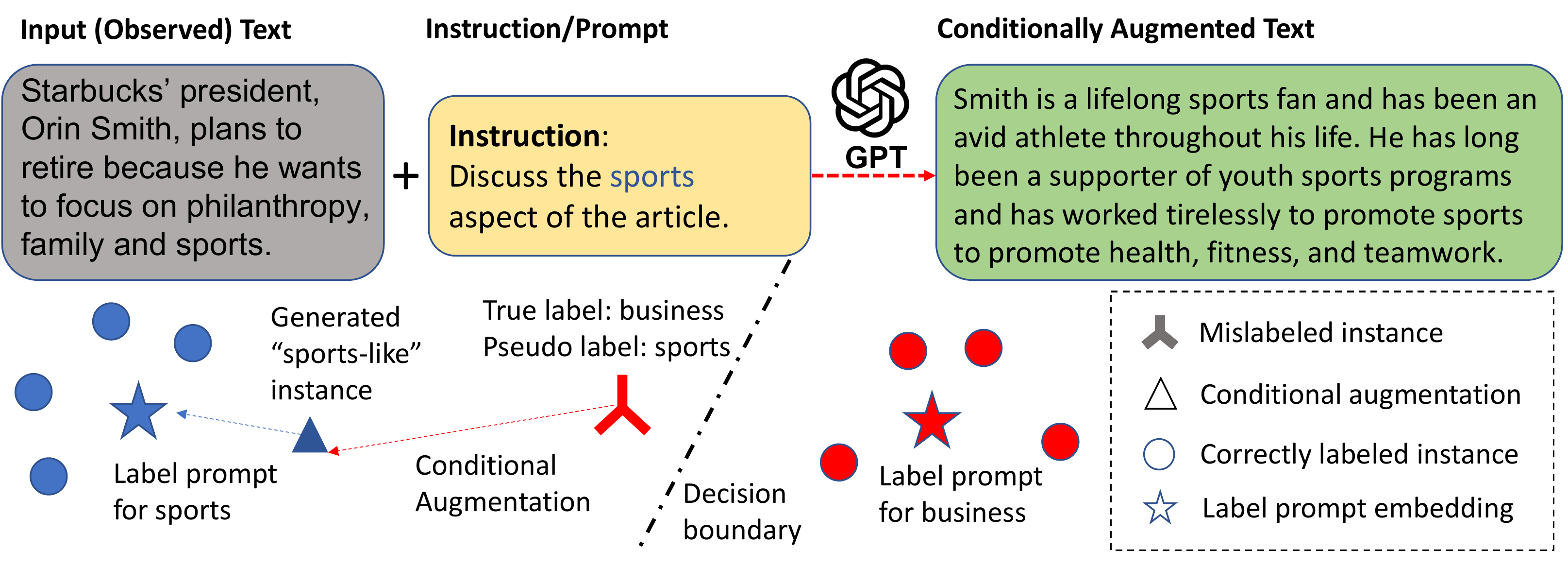}
    \caption{Conditional text augmentation to address mislabeling in self-training: When a pseudo label is incorrect, it can mislead the training process and decrease classification performance. We generate augmented text conditioned on the pseudo label, aiming to make the generated text closer to the majority members in the category of the pseudo label. This approach aims to improve the quality of the generated instances for self-training. }
    \label{fig:conditional_aug_illustration}
\end{figure*}

While self-training can potentially improve model generalization, the limitations are obvious: 1) pseudo labels are prone to error and may negatively affect model training. 2) self-learning requires a significant amount of unlabeled data, which may not always be available. To alleviate the above problems, we introduce generation-driven approaches to improve self-training with an instruction-following LLM, such as Alpaca-7B.

\subsection{Semantic Enrichment using LLM}
\label{subsec:input_augment}
In this section, we propose a way to enrich the semantic information of an input text with multiple LLM-generated pieces of text. When the input text is relatively short, such as consisting of only one or a few sentences, the information may not be sufficient for alignment-based method to match relevant labels. 

A remedy is to query an LLM to elaborate the input and generate multiple pieces of extended texts. As shown in \figref{fig:general_aug_illustration}, the instruction, "Elaborate the text with a few sentences," steers the LLM towards creating relevant expansions and continuations for the input text $x$. These augmented texts, denoted as $x^{\text{aug}}$, serve for two purposes: 1) improving the quality of pseudo label, and 2) forming the positive pair in contrastive learning, as detailed below:



\paragraph{Enhancing pseudo label quality.} We enhance pseudo label prediction by enriching the input embedding of \eqref{eq:inference} by:
\begin{equation}
\frac{1}{K}\sum_{i=1}^K f_\theta(x \oplus x_i^{\text{aug}}),
\end{equation}
where $\oplus$ is the concatenation operator for text and $x_i^{\text{aug}}$ is the $i$-th sample from $P_g(\cdot | x)$. The mean of the embeddings summarize the information induced by LLM. 

\paragraph{Constructing positive training pairs.} We propose a contrastive loss between input text and generated text as another training objective. Let $\mathit{I}$ be a training batch and $A(i)$ 
be the set of augmented texts with the same pseudo-label as input $x_i$. Our objective encourages proximity between $x$ and $x^{\text{aug}}$ (sampled from $A(i)$) in the embedding space:
\begin{equation} 
\fontsize{10pt}{12pt}
\begin{split}
     &\mathcal{L}_{t2g}=\sum_{i \in \mathit{I}} \frac{ -1}{|A(i)|} \\ 
     &\sum_{x^{\text{aug}} \in A(i)}\log{\frac{\exp({sim(f_{\theta}(x_i),f_{\theta}(x^{\text{aug}}) )})}{\sum_{j \in \mathit{I}}\exp(sim(f_{\theta}(x_i),f_{\theta}(x_j) ))}}.
\end{split} \label{eq:t2g}
\end{equation}


\begin{algorithm*}[ht!]
  \caption{Self-training with GPT assisted in the loop}
  \textbf{Require:} Unlabeled texts $X$, label descriptions $C$, instruction-tuned GPT model $g(\cdot)$.\\
  \textbf{Initialization:} Classifier $f_{\theta}(\cdot)$ initialized with pre-trained sentence encoder. Empty dictionary $\operatorname{GenDict}$ to cache conditional generated text.  \\
  \textbf{Input augmentation}: For each observed text, generate $K$ samples of augmented text from $P_g(\cdot | x)$.\\
  \For {$t: 1 \rightarrow T$ self-training iterations}
  {
    Use $f_{\theta}(\cdot)$ to generate pseudo-labels $\hat{y}$ (eq.\ref{eq:inference}) and soft-target $Q$ (eq.\ref{eq:soft-label}) for texts with input augmentation in Section.\ref{subsec:input_augment}. Sample a balanced subset of pseudo-labeled training pairs of size $S_t$ according to prediction confidence\;
    \For {each training sample $(x, \hat{y})$}
    {
        \eIf{key $(x, \hat{y}) \in \operatorname{GenDict}$}
        {
            Fetch generated texts from $\operatorname{GenDict}$ \Comment{Use cached generated text}\;
        }
        {
            Generate $M$ samples from $P_g(\cdot | x, \hat{y})$\Comment{Conditional augmentation in Section~\ref{subsec:condition_aug}}\;
            Add generated texts to $\operatorname{GenDict}$\Comment{Cached generated text}\;
        }
    }
    Use sampled training pairs and the conditionally generated text to update the parameters $\theta$ of $f_{\theta}(\cdot)$ with the objective function $\mathcal{L} = \mathcal{L}_{g2l} + \mathcal{L}_{t2g}$ from \eqref{eq:t2g} and \ref{eq:g2l}.
  }
\label{alg:self-train}
\end{algorithm*}

\subsection{Crafting Training Pairs with LLM}
\label{subsec:condition_aug}
Self-training can introduce bias into a classifier due to mislabeling instances. To address this issue, we propose to generate high quality pseudo-labeled data pairs, as shown in \figref{fig:conditional_aug_illustration}.
Consider an instance where an article about the retirement of Starbucks' president, whose true label is "business", is mistakenly labeled as "sports". Training the model with this incorrect label blurs the distinction between the business and sports categories.

To mitigate this issue, we employ the LLM to conditionally augment the input text based on the sports category. This is achieved by framing instructions like, "Discuss the sports aspects of the article". Consequently, the produced text mirrors typical articles within the sports category. By optimizing this newly generated text, instead of the original mislabeled instance, we correct its placement relative to the decision boundary separating "sports" and "business". Essentially, by creating texts based on pseudo labels, we synthesize training pairs that enhance the separation of class labels in the embedding space, thereby addressing the challenges of mislabeling inherent to self-training.

Let $x^\text{cond}$ be the conditionally augmented text, the modified \eqref{eq:loss} is:
\begin{equation}
\fontsize{10pt}{12pt}
    \mathcal{L}_{g2l} = -\sum_{i=1}^{N} \sum_{j=1}^{L} Q(\hat{y}_j|x_i^{\text{cond}})\log P(\hat{y}_j|x_i^{\text{cond}})
    \label{eq:g2l}
\end{equation}

\subsection{Algorithm for Self-training}
We apply self-training with \eqref{eq:t2g} and \ref{eq:g2l} in an iterative way as shown in \Algref{alg:self-train} with LLM assisting in the loop. During training, we found that a balanced sampling that keeps the same number ($S_t$ for iteration $t$) of training for each category is important for the stability of self-training. Additionally, we use a dictionary $\operatorname{GenDict}$ to cache the conditional generated text to avoid repeated generation for better efficiency.

\section{Experiments}
\label{sec:experiments}
\subsection{Datasets and Experimental Settings}
\begin{table*}[ht!]
\begin{adjustbox}{width=\linewidth,center}
\begin{tabular}{ccccccc}
\toprule
Dataset & Classification Type & \#Classes & \#Orig Train & \#Our Train & \#Test & Avg Length \\
\hline
AG News & News Topic & 4 & 120,000 & 4,000 & 7,600 & 38 \\
DBPedia & Wikipedia Topic & 14 & 560,000 & 11,200 & 70,000 & 50 \\
Yahoo Answers & Question Answering & 10 & 1,400,000 & 15,000 & 60,000 & 70 \\
Amazon & Product Review Sentiment & 2 & 3,600,000 & 20,000 & 400,000 & 78 \\
\bottomrule
\end{tabular}
\end{adjustbox}
\caption{Statistics of datasets for multi-class text classification.\label{tab:data_stats}}
\end{table*}

We conduct experiments on $4$ benchmark text classification datasets: AG News, DBpedia, Yahoo Answers and Amazon, with the statistics shown in \tblref{tab:data_stats}. In the experiments, we initialize our sentence encoder with supervised SimCSE Roberta-base model (110M parameters)~\cite{gao2021simcse}. For the generative model, we use the Alpaca-7B~\cite{alpaca} as our choice of LLM, which is a GPT model fine-tuned with human instructions~\cite{touvron2023llama}. The label prompts and the instruction template are illustrated in \tblref{tab:prompt_design} in Appendix. Please refer to \secref{apd:experiment} in Appendix for implementation details.



\label{subsec:main results}
\begin{table*}[ht!]
\begin{tabular}{ccccccc}
\toprule
ID & Self-train & Methods           & AG News       & DBpedia & Yahoo Answers & Amazon \\ \Xhline{2\arrayrulewidth}
1 & --         & Supervised        & 94.2          & 99.3    & 77.3          & 97.1   \\ \hline
2 & No         & SimCSE (Sentence-enc)  & 74.5          & 73.8    & 55.6          & 88.8   \\
3 & No         & Alpaca-7B (LLM)  & 77.4        & 60.6    &   52.1       & 86.6   \\
4 & Yes        & iPET              & 86.0          & 85.2    & 68.2          & 95.2   \\
5 & Yes        & LOTClass          & 86.4          & 91.1    & --           & 91.6   \\ 
\Xhline{2\arrayrulewidth}
\hline
6 & --        & Supervised-downsample* & 93.8 & 98.7    &  76.5 & 97.0\\
\hline
7 & Yes & PESCO* & 85.0 &	96.6 & 65.8	& 92.4 \\
8 & Yes  & \ourmodel* & \textbf{89.2} & \textbf{98.3}  & \textbf{68.7}          &   \textbf{95.4}     \\
\hline
9 & Yes & \ourmodel*  - CA & 87.5 & 97.6 & 65.1 & 94.3  \\
10 & Yes & \ourmodel*  - IA & 86.2 & 97.1 & 63.5 & 93.6\\
11 & Yes & SimCSE + Self-training (Eq \ref{eq:loss}) & 83.2	& 94.3	& 62.7	& 91.5 \\
\bottomrule
\end{tabular}
\caption{Comparison of classification methods on benchmark datasets. The test accuracy of best performing zero-shot method is highlighted in bold phase. Row 7-11 (with *) use a down-sampled dataset with 4k (3.4\%), 11.2k (2\%), 15k ($<$1\%), 20k ($<$1\%) unlabeled training instances respectively. Rows 9-11 are ablation tests with input augmentation (IA) or conditional augmentation (CA) removed. \label{tab:main_results}}
\end{table*}

\subsection{Baseline Methods}
\noindent \textbf{Alpaca-7B} is a LLM baseline for zero-shot classification. We solicit the LLM for zero-shot classification with the instruction "Classify the text by outputting a single category from [label categories]".

\noindent \textbf{iPET}~\cite{schick2020s} formulates zero-shot text classification as a cloze test, where a pre-trained BERT~\cite{devlin2018bert} model is used to predict the output label(s) by completing a prompt such as ``This article is about \_", which is concatenated right after an input document. An iterative self-training algorithm is used in iPET to improve the model for better generalization.

\noindent \textbf{LOTClass}~\cite{meng2020text} applies the BERT model to extract keywords related to the label names from unlabeled texts and then create pseudo labels based on the extracted keywords. LOTClass also applies a self-training algorithm to further improve the classification performance.

\noindent \textbf{PESCO}~\cite{wang2023pesco} formulates zero-shot classification as sentence alignment and uses contrastive self-training to improve the model performance. As an augmentation, it selects salient sentences from documents to create additional positive training pairs. 

\subsection{Experimental Results}
In \tblref{tab:main_results}, we present a comparison of the test accuracy of our model with other baselines on four benchmark classification datasets. Specifically, rows 1-5 are experiments using the entire (unlabeled) training set and rows 6-11 use a down-sampled dataset with 4k (3.4\%), 11.2k
(2\%), 15k (<1\%), 20k (<1\%) unlabeled training instances from the original datasets respectively.

\noindent \textbf{Comparison with Alpaca-7B}: While Alpaca-7B (row 3) has demonstrated strong instruction following ability to solve problems without any training, it exhibits lower performance compared to \ourmodel (row 8) and other self-training methods on classification task. The reason could be attributed to the domain adaptation effect of self-training. Classification tasks involve comparing instances, such as an article being more likely to belong to the ``sports" category when compared to articles in the ``business" category. In our analysis in \secref{subsec:self-training}, self-training enforces the separation between classes to improve the generalization ability. This can be further supported when the number of classes increases in DBpedia and Yahoo Answers dataset, the performance of Alpaca gets worse. Furthermore, Alpaca-7B takes 9 minutes per 10k instances on one A6000 gpu while \ourmodel takes 10 seconds, which is roughly x50 speed up.



\begin{figure*}[ht!]
    \centering
    \includegraphics[width=0.4\linewidth]{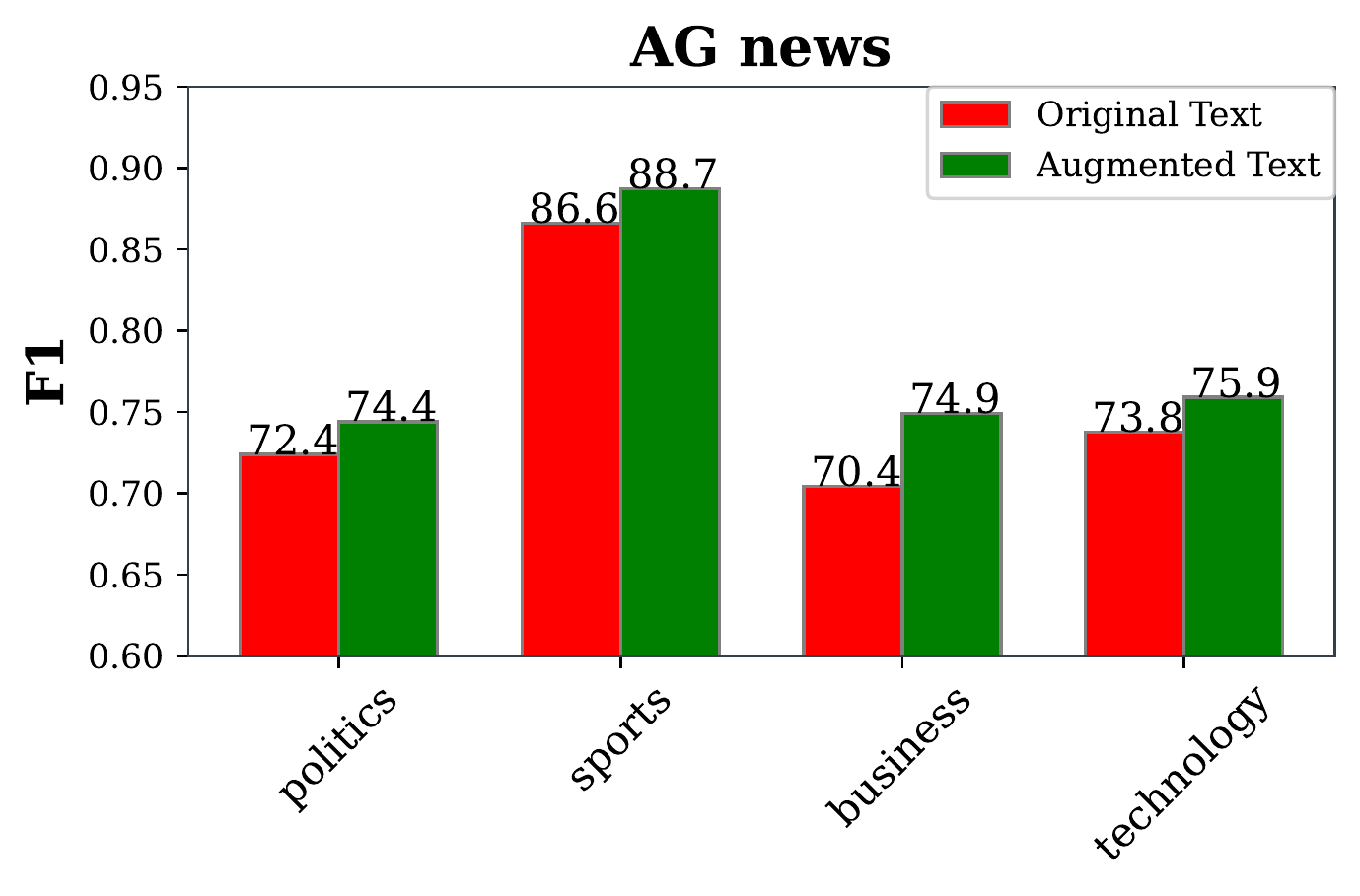}
    \includegraphics[width=0.4\linewidth]{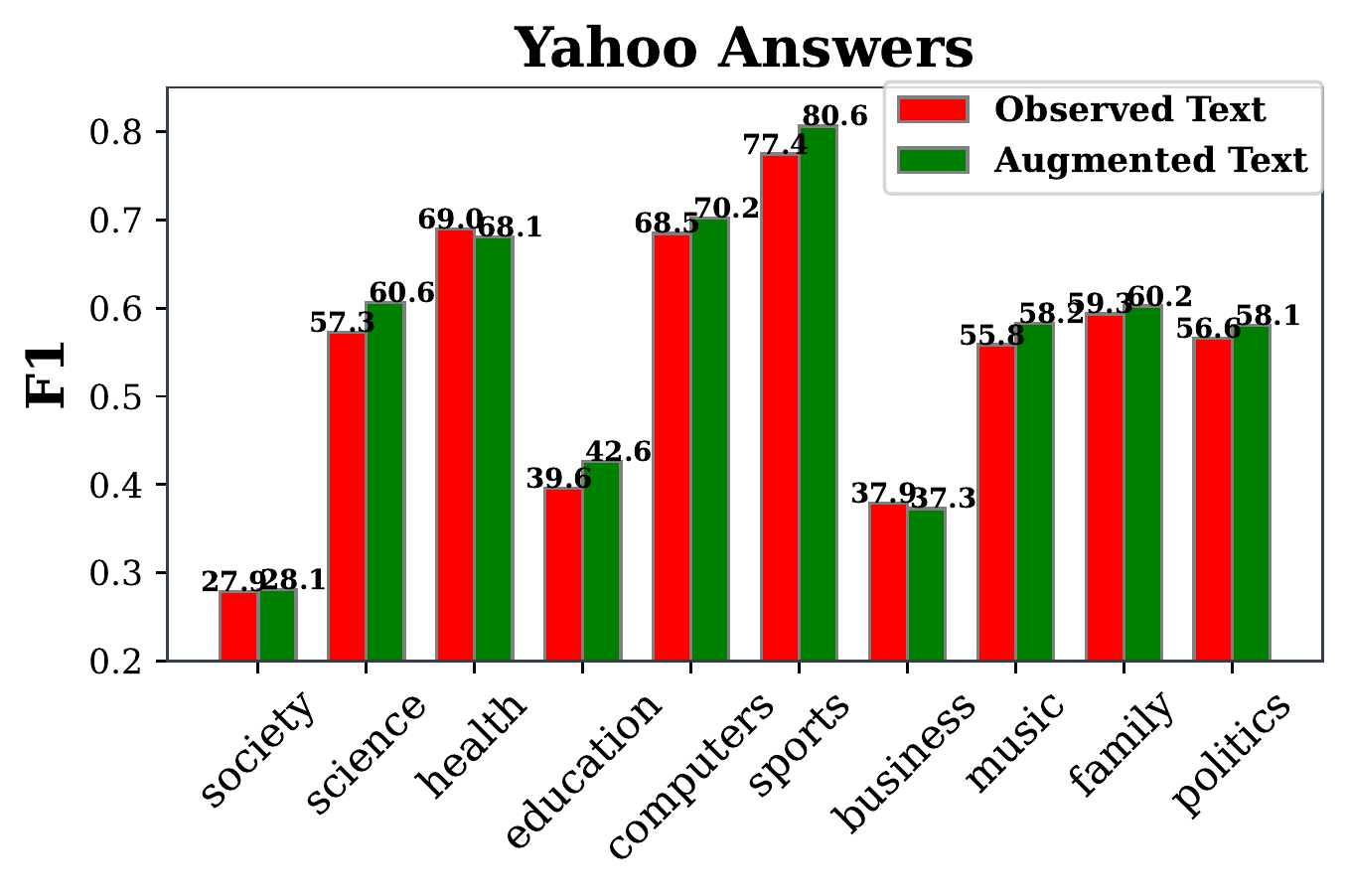}
    \includegraphics[width=0.4\linewidth]{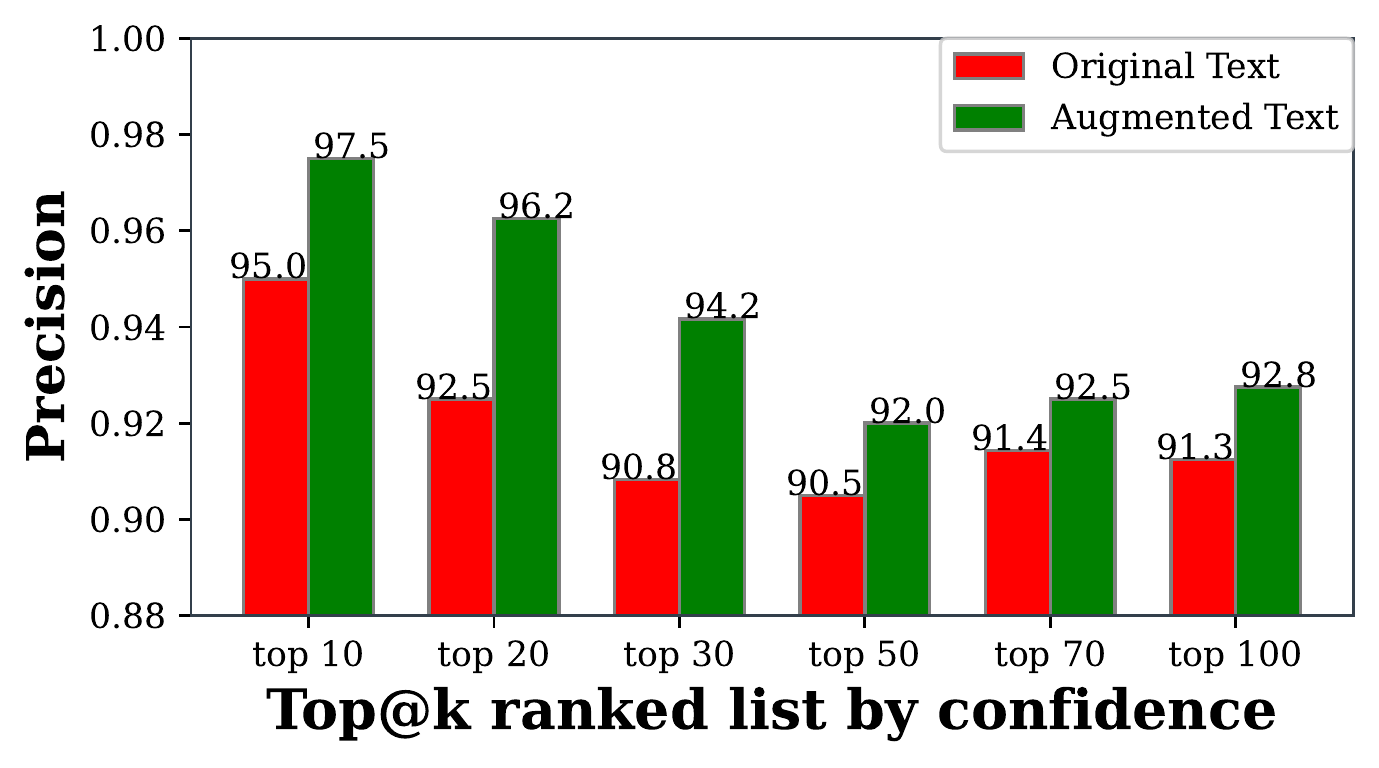}
    \includegraphics[width=0.4\linewidth]{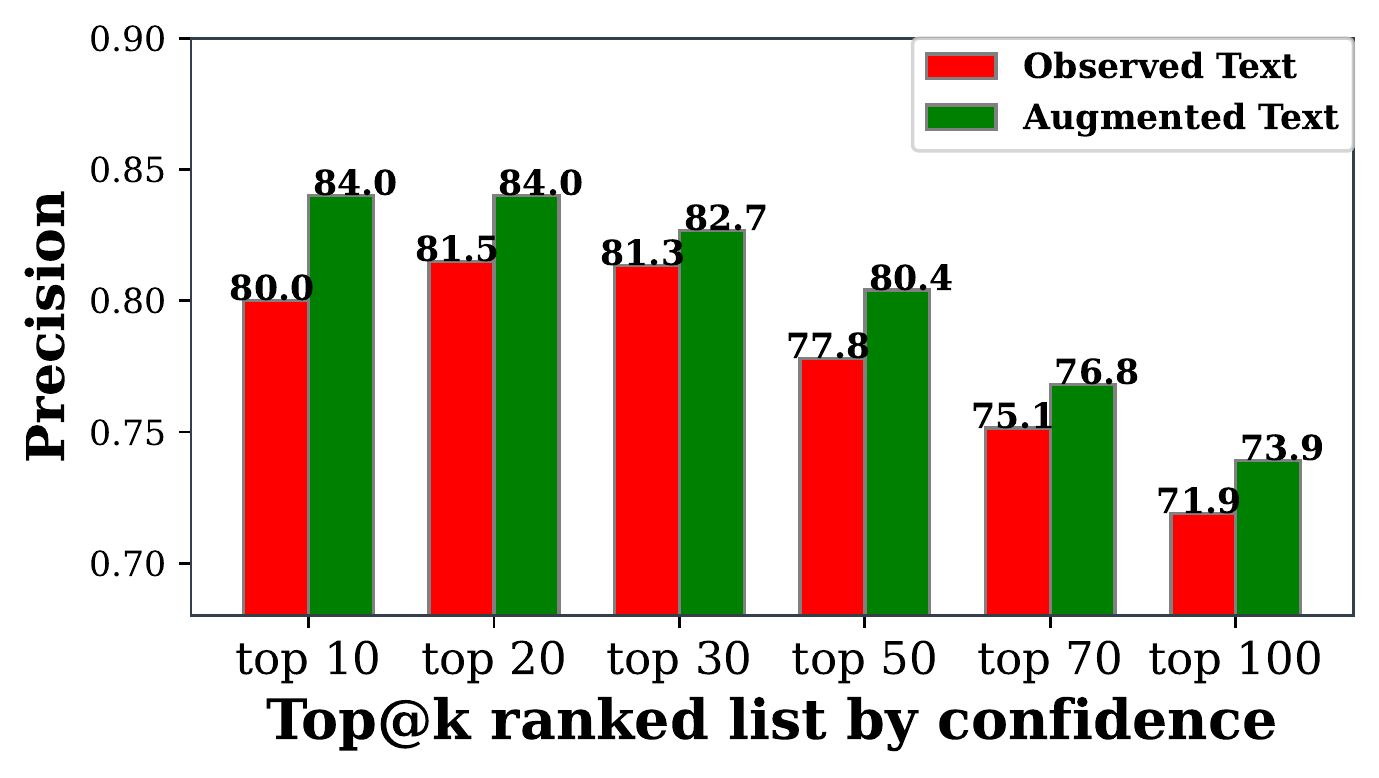}
    \vspace{-0.5cm}
    \caption{Per class F1 (upper) and ranking-based precision (lower) for classification performance with input augmentation.}
    \label{fig:exp_general_aug}
\end{figure*}

\noindent \textbf{Comparison with SOTA Methods}: 
Both iPET (row 4) and LOTClass (row 5) use self-training algorithm for zero-shot classification, but \ourmodel outperforms the previous self-training methods even with significantly fewer instances ($< 5\%$ of original size). 
The iPET model improves pseudo label prediction with an ensembling about 15 models to reduce prediction variance. In comparison, our approach improves pseudo label prediction by ensembling augmented text embedding during self-training, leading to improved performance and a more memory efficient alternative.
While LOTClass uses a BERT model to extract keywords for each category as an augmentation, it is less expressive than using an LLM to generate coherent human language as augmentation. 
PESCO (row 7) is the most recent SOTA with contrastive self-training and introduced an augmentation technique by learning on salient sentences. However, the method still requires a large amount of data to be effective. In scenarios where only a limited number of unlabeled texts are available, PESCO still underperforms our model. 

\noindent \textbf{Effectiveness of Contrastive Self-training}: Row 2 represents the sentence encoder baseline with SimCSE, whereas row 11 represents SimCSE + contrastive self-training algorithm as per \eqref{eq:loss}. The result shows that incorporating contrastive self-training leads to significant gains. Compare row 3 (Alpaca-7B) with row 11. Despite being a larger model in scale, Alpaca-7B still outperforms the self-training approach across all benchmark datasets, underscoring the effectiveness of class separation with self-training for classification task.


\subsection{Analysis of LLM Augmentation}
In this section, we denote the input augmentation in \secref{subsec:input_augment} as IA and the conditional augmentation based on pseudo label in \secref{subsec:condition_aug} as CA.
Rows 9 and 10 in \tblref{tab:main_results} shows ablation tests with CA and IA removed. Overall, our LLM data augmentation, with and without conditioning on pseudo label, both lead to improved performance, due to their ability to provide more accuracy pseudo label and high quality synthetic training pairs.

\noindent \textbf{Effectiveness of IA}:
In this evaluation, we investigate the effectiveness of input augmentation for first round pseudo-labeling \textit{without training}. We evaluate the performance of our model on two datasets, namely AG News and Yahoo Answers, using two evaluation metrics: per class F1 metric and ranking-based precision metric according to prediction confidence. The per class F1 metric provides an insight into how well the model performs on each individual class by balancing precision and recall. In the upper part of \figref{fig:exp_general_aug}, our findings indicate that LLM augmented data leads to improved performance across all categories for AG News and in eight out of ten classes for Yahoo Answers.

In the lower part of \figref{fig:exp_general_aug}, we employ a ranking-based precision metric to assess the quality of the most confident cases. Our results demonstrate that using augmented data yields better precision for the most confident cases. Notably, our study on the Yahoo Answers dataset indicates that the predictions are better calibrated with the use of augmented data, implying that highly confident samples exhibit better precision. Conversely, such a trend was not observed in unaugmented data, where the top 30 had higher accuracy than the top 10. Better calibration justifies the sampling from the most confident pools for self-training, making it a more reliable method for improving model performance.

\begin{table*}[th!]
\begin{adjustbox}{width=0.95\linewidth,center}
\begin{tabularx}{\textwidth}{c|X}
\hline
\makecell{Original Input Text}  &  Starbucks Corp's president and chief executive, Orin Smith, said Tuesday he plans to retire early next year because he wants to slow down and focus on philanthropy, family and sports. \\
\hline
\makecell{Conditioned on Politics} & The announcement of Orin Smith's retirement is likely to have \high{political implications} for Starbucks Corporation... \high{His replacement} will have big shoes to fill, and the decision could have long-term effects on the \high{company's policies and direction}.
 \\
\hline
\makecell{Conditioned on Sports} & ...\high{Sports} have always been a major part of Smith's life, as he was a \high{college athlete} and later went on to become the CEO of Starbucks. It is clear that sports have had a major influence on his life and he wants to make time for them in his retirement. \\
\hline
\makecell{Conditioned on Business} & Starbucks Corp has seen great success under the leadership of Orin Smith, with the company's \high{stock price more} than tripling since he became CEO in 2005. This success has allowed him to retire early and …
 \\
\hline
\makecell{Conditioned on Technology} & Orin Smith's plan to retire early next year is an example of how \high{technology} has changed the way we work and live. By utilizing technology, Smith is able to take advantage of the increasingly popular trend of ``work-life balance" ... 
 \\
\hline
\end{tabularx}
\end{adjustbox}
\caption{Examples of generated text conditioned on pseudo labels in the left column.}
\label{tab:example_cond_gen}
\end{table*}

\noindent \textbf{Effectiveness of CA}:
\label{sec:analysis_cond_gen}
To study the quality of conditional generation based on class labels, we first present examples of generated texts from an sample in AG News dataset, shown in \tblref{tab:example_cond_gen}. Each example is a cherry-picked sample out of five random samples. New training instances are crafted by generating augmented text conditioned on each label. The augmented text expands on specific aspects of the label while retaining the meaning of the original input text.

\begin{figure*}[th!]
    \centering
    \includegraphics[width=0.4\linewidth]{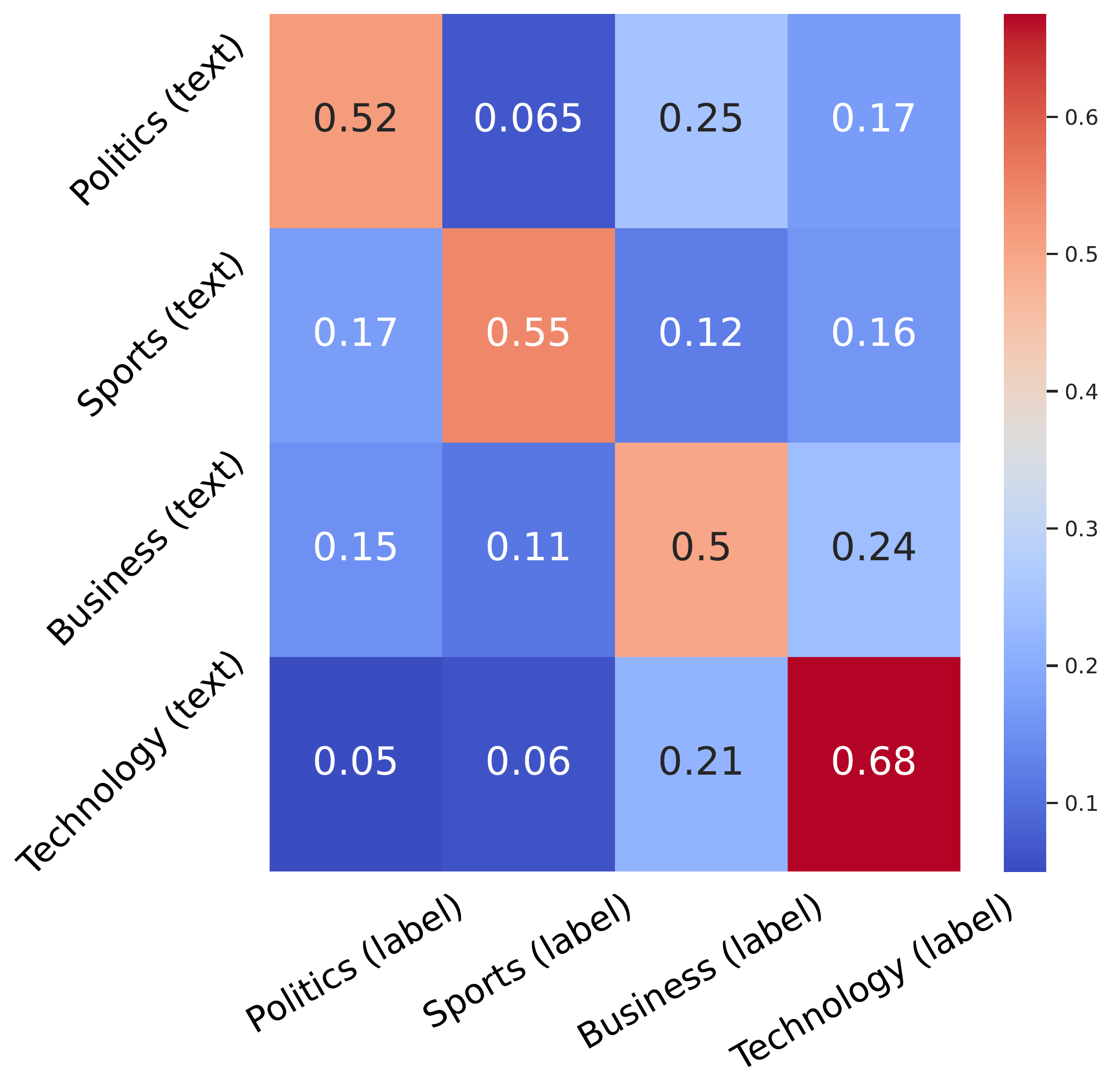}
    \includegraphics[width=0.5\linewidth]{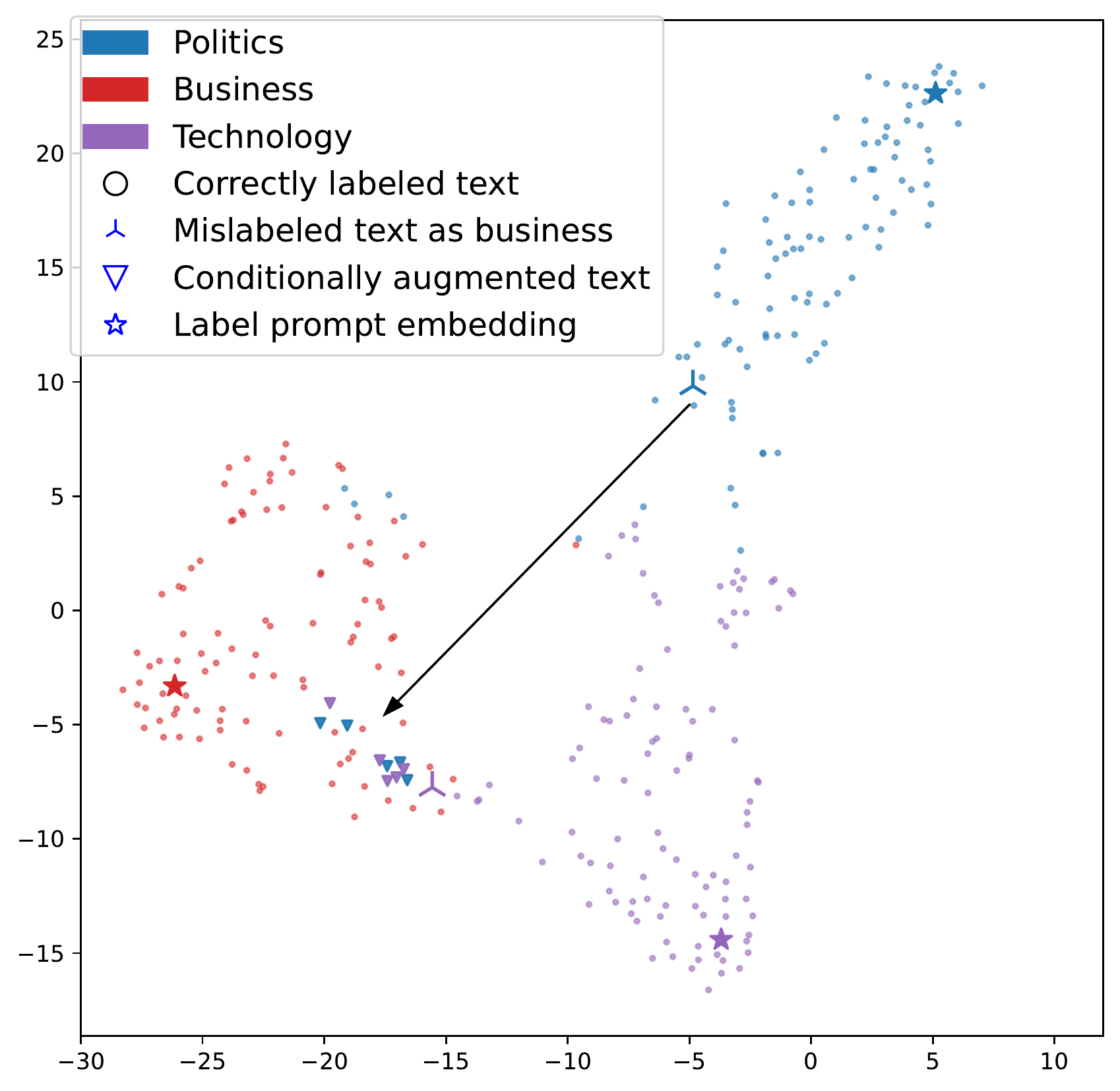}
    \caption{The left figure shows a heatmap of the probability when a conditionally generated text based on pseudo label aligns with each of the label prompts. The right figure shows the distribution of the generated text plotted using T-SNE (sports category is out of scope).}
    \label{fig:ablation_cond_aug}
\end{figure*}

In the left of \figref{fig:ablation_cond_aug}, we show a heatmap of the probability when a conditionally generated text (vertical) aligns with the corresponding label class (horizontal). The highest probability occurs along the diagonal, indicating that the conditionally augmented text based on pseudo label has a closer meaning to the corresponding label class. In the right of \figref{fig:ablation_cond_aug}, we plot the distribution of the generated text plotted using T-SNE. The embeddings were obtained by our sentence encoder trained on the $100$-th (out of $1000$) iteration. We selected two instances that were misclassified as business and located close to the decision boundary. The augmented text, conditioned on the business category, was found to be closer to the label prompt embedding of the business category. This demonstrates the effectiveness of our method to generate less confusing training pairs away from the decision boundary and closer to the pseudo label centroid.

\section{Related Work}

\noindent \textbf{Knowledge Distillation from GPT: }
To leverage the language modeling power of large model, previous work distills knowledge from LLM to improve downstream classification performance~\cite{shridhar2023distilling, sahu2022data, wei2021few, sun2020contrastive, honovich2022unnatural,chiang2023vicuna}, or directly generates text and label pairs~\cite{yoo2021gpt3mix,ye2022zerogen,meng2022generating} to train a classifier for downstream tasks. However, generating training data from scratch can lead to low-quality data with unrelated or ambiguous examples analyzed in~\citep{gao2022zerogen}. Our generation is grounded in the context of the corpus with enrichment in semantic and diversity, providing a practical alternative to generation-based methods for zero-shot text classification and knowledge distillation.

\noindent \textbf{Zero-shot Text Classification: }
Zero-shot text classification involves classifying text into categories without the aid of pre-labeled examples. This domain has seen various approaches, including clustering-based methods that utilize text embeddings to generate robust pseudo-labels ~\cite{cho2023celda, fei2022beyond}. Other models adopt annotation generation by identifying keywords akin to label descriptions~\cite{meng2020text} or employing keyword correlation graphs~\cite{zhang2021weaklysupervised}. A notable recent work CeLDA ~\cite{cho2023celda}, which achieves comparable results to our model. However, its effectiveness is relied on using a large model (T5 11B) and large amount in-domain text available~\cite{cho2023celda}.

In contrast, our work formulates zero-shot text classification as sentence alignment, a perspective shared by several recent studies ~\cite{gao2021simcse, hong2022tess, shi2022nearest, wang2023pesco, zhang-etal-2023-long}. This approach typically employs contrastive learning for training sentence encoders, aiming to optimize representations by minimizing distances between semantically similar inputs and maximizing distances between dissimilar ones in the embedding space. Our model innovatively uses a large language model (LLM) to generate training pairs, facilitating the training of a robust classifier for zero-shot text classification, even with a limited number of instances available.

\subsection{Self-training Methods}
Self-training methods~\cite{van2020survey} have been proposed as a semi-supervised approach to improve a classifier from unlabeled datasets, where predictions on unlabeled data are used to fine-tune the classifier~\cite{lee2013pseudo}. To improve the pseudo label quality, previous work~\cite{gera2022zero} use a small set of instances with the most confident prediction for self-training. LOTClass~\cite{meng2020text} improves the quality of pseudo label by an expansion of label vocabulary using BERT and iPET~\cite{schick2020s} ensembles multiple version of model at different stage of training. Our work improves self-training by generating augmented text with instruction-tuned LLM in the training loop.

\section{Conclusion}
In conclusion, our proposed approach, GenCo, effectively addresses the difficulties and limitations of using LLMs directly for zero-shot text classification. By leveraging the generative power of an LLM in a self-training loop of a smaller, sentence encoder classifier with contrastive learning, \ourmodel outperform state-of-the-art methods on four benchmark datasets. Our approach is particularly effective when limited in-domain text data are available. The success of our approach highlights the potential benefits of incorporating the generative power of LLM into iterative self-training processes for smaller zero-shot classifiers. We hope that our work will inspire further research in this direction, ultimately leading to more efficient and effective NLP models.

\section{Limitations}
The main goal of our paper is to promote the usage of LLMs (Alpaca-7B in our case) to assist in training of a smaller model (Roberta-SimCSE) on zero-shot classification tasks. The proposed loss function and corresponding analysis is shown in \tblref{tab:select_temperature} in Appendix, but we mainly use that as a theoretical motivation of leveraging decision boundaries between classes.

Another part is data efficiency. We have shown that using GPT generated data can alleviate the data hungry issue for deep learning models. However, when there is abundant of data, generating training instances with LLM can be expensive with less gains. Also, due to compute and buget limitations, we didn't use larger LLMs for our experiments, as an estimiated cost will be around 150\$ per dataset with the GPT-3.5 at time of writing.

Finally, we realize that more tricks and engineering designs are employed in our experiments and please refer to our code for reference.

%
%
%
\bibliography{custom}
\bibliographystyle{acl_natbib}

\appendix
\section{Experiments}
\label{apd:experiment}
\subsection{Implementation Details}
The label prompts are shown in the upper part of \tblref{tab:prompt_design}. The label prompts are similar to the ones used in in PESCO~\cite{wang2023pesco}. We solicit LLM for text augmentation with the instruction template in the lower part of \tblref{tab:prompt_design}, which is the same ones used for Alpaca fine-tuning.

For the generation parameters, we used $temperature$=0.8, $top\_p$=0.95, and sample $K$=5 augmented texts for each instance with $min\_length=64$ and $max\_length=128$. 
For the self-training of sentence encoder model, we used $batch\_size$=$3*|C|$ ($|C|$ is the number of categories), $lr$=1e-5, the max length is $128$ for AG News and DBPedia and $192$ for Yahoo Answers and Amazon.
All the experiments are performed on NVIDIA RTX A6000 gpus. Please refer to our code for details.

\begin{table}[ht!]
\small
\begin{tabular}{l}
\toprule
\textbf{Label Prompt} \\
\makecell[l]{(1)Category: [label]. \\ (2)It is about [label].} \\
\hline
\textbf{Instruction-based (Conditional) Augmentation} \\
\makecell[l]{Below is an instruction that describes a task,  paired \\ with an input that provides further context. Write a \\ response that appropriately completes the request.\\\#\#\# Instruction:\\ Elaborate the text in a few sentences. \\ (Discuss the [pseudo label] aspects of the article.) \\ \#\#\# Input:\\ $[$text$]$ \\ \#\#\# Response:
}\\
\bottomrule
\end{tabular}
\caption{The designed prompts for enhanced label description and conditional augmentation based on pseudo label.\label{tab:prompt_design}}
\end{table}

\subsection{Selection of Temperature in Eq \ref{eq:loss}}
\label{apd:temperature}
As shown in \tblref{tab:select_temperature}, we include the results with over 5 runs on each dataset. We found $\tau = 0.1$ to be a reasonble choice with slightly better performance, but we acknowledge that the difference is rather small, sometimes fall within std. The choice of $\tau$ may serve more of a theoretical motivation rather than practically concerns (as acknowledged in limitation). The theoretical framework unifies previous soft labeling approaches in~\cite{meng2020text,wang2023pesco} and is easier for the proof of theorem.

\begin{table*}[ht!]
\begin{tabular}{c|cccc}
\toprule 
       & Agnews        & DBpedia       & Yahoo Answers & Amazon        \\
\hline
$\tau$=1.0  & 82.75 $\pm$ 0.06 & 93.77 $\pm$ 0.07 & 62.66 $\pm$ 0.06 & 91.39 $\pm$ 0.06 \\
$\tau$=0.5  & 83.04 $\pm$ 0.05 & 94.19 $\pm$ 0.05 & 62.70 $\pm$ 0.10 & 91.44 $\pm$ 0.06 \\
$\tau$=0.1  & \bf 83.18 $\pm$ 0.05 & 94.29 $\pm$ 0.05 & 62.74 $\pm$ 0.08 & \bf 91.48 $\pm$ 0.05 \\
$\tau$=0.05 & 83.03 $\pm$ 0.05 & \bf 94.34 $\pm$ \bf 0.03 & \bf 62.77 $\pm$ 0.10 & 91.42 $\pm$ 0.04 \\
$\tau$=0.01 & 83.02 $\pm$ 0.05 & 94.33 $\pm$ 0.03 & 62.76 $\pm$ 0.11 & 91.42 $\pm$ 0.04  \\
\bottomrule 
\end{tabular}
\caption{For the choice of temperature $\tau$ in \eqref{eq:loss}, we include the results with over 5 runs on each dataset. We found $\tau = 0.1$ to be a reasonble choice with slightly better performance, but we acknowledge that the difference is rather small, sometimes fall within std. \label{tab:select_temperature}}
\end{table*}

\subsection{Inference Time Augmentation}
While \ourmodel doesn't require LLMs during inference, in our ablation test in \tblref{tab:ablation}, we study the impact of inference time augmentation (assuming GPT is available at test time) and self-training on the performance metric. To test inference time augmentation, we performed experiments on a downsampling of both training and testing instances.

Our results show that inference time augmentation (rows with "IA") leads to a performance gain of $1$-$2\%$, with a more substantial improvement observed for AG News and Yahoo Answers. This may be attributed to the fact that AG News has an average text length of only $38$ words, and the Yahoo Answers dataset includes many answers with only one phrase. Inference time augmentation effectively enhances the quality of shorter text inputs.

\begin{table*}[ht!]
\begin{tabular}{ccccccc}
\toprule
ID & Self-train & Methods           & AG News       & DBpedia & Yahoo Answers & Amazon \\ 
\Xhline{2\arrayrulewidth}
\multicolumn{3}{c}{\# unlabeled train} & 4k (3.4\%) & 11.2k (2\%)& 15k ($<$ 1\%) & 20k ($<$ 1\%) \\
\multicolumn{3}{c}{\# test} & 7.6k & 28k & 20k & 20k  \\
\hline
1 & No & Sentence-enc & 75.6  & 73.4  & 55.5 &  89.6 \\
2 & No & Sentence-enc $+$ Inf-Aug  & 78.2 & 74.7 & 57.4 & 90.2 \\
\hline
3 & Yes & Self-train & 83.3 & 96.3 & 62.5 & 91.1 \\
4 & Yes & Self-train $+$ Inf-Aug & 83.9 & 96.8 & 64.3 & 91.3 \\
5 & Yes & \ourmodel   & 89.2 & 98.4 &  68.6 &  95.3  \\ 
6 & Yes & \ourmodel $+$ Inf-Aug & 89.7 & 98.5 &  70.2 &  95.4  \\ 
\bottomrule
\end{tabular}
\caption{Evaluation of inference time augmentation. "Inf-Aug" represents input augmentation added during inference.\label{tab:ablation}}
\end{table*}

\subsection{Qualitative Examples for Conditionally Generated Examples on Pseudo-label}

In \tblref{tab:example_cond_gen}, we show generated examples of a sample text from the Agnews dataset. We generate $5$ examples conditioned on each of the $4$ labels, and cherry-pick one for each label in the table presentation. The example shows that the topic of a generated text is related to the label which is conditioned on, while pertains the original meaning. This opens a path to leverage the language understanding ability of LLM for data augmentation, especially during self-training.

\onecolumn
\section{Proof of Theorems}
\label{apd:proofs}
\begin{theorem}
\label{theorem:1}
Consider a binary classification problem with linearly separable labeled examples, when $0<\tau<1$, optimizing $\mathcal{L}_{t2l} = -\sum_{i=1}^{N} \sum_{j=1}^{L} Q(\hat{y}_j|x_i)\log P(\hat{y}_j|x_i)$ with gradient descend will enforce the larger margin between classes. 
\end{theorem}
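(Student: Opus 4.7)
The plan is to reduce the statement to a one-dimensional computation in the signed margin and then read off the sign of the gradient of the per-example loss. First I would specialize to $L=2$ and introduce the scalar
\[
\Delta := \operatorname{sim}(f_\theta(x), f_\theta(p_1)) - \operatorname{sim}(f_\theta(x), f_\theta(p_2)),
\]
so that $P(\hat y_1 \mid x) = \sigma(\Delta)$ and $Q(\hat y_1 \mid x) = \sigma(\Delta/\tau)$ where $\sigma$ is the logistic function. The per-example loss then takes the clean form
\[
\ell(\Delta) = -\sigma(\Delta/\tau)\log\sigma(\Delta) - \sigma(-\Delta/\tau)\log\sigma(-\Delta),
\]
and $\mathcal{L}_{t2l} = \sum_i \ell(\Delta_i)$, so the whole theorem reduces to a statement about the univariate function $\ell$.

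The core step is to differentiate $\ell$. Writing $p = \sigma(\Delta)$, $q = \sigma(\Delta/\tau)$ and using $\sigma'(z) = \sigma(z)\sigma(-z)$, a direct calculation gives
\[
\ell'(\Delta) \;=\; \frac{q(1-q)}{\tau}\log\frac{1-p}{p} \;+\; (p - q).
\]
I would then read off the sign. For any $\Delta>0$ (an example sitting on the correct side of the current decision boundary), $p>\tfrac12$ forces $\log\frac{1-p}{p}<0$; and because $0<\tau<1$ sharpens $Q$ relative to $P$, we get $q>p$ and hence $p-q<0$. Both summands of $\ell'(\Delta)$ are therefore strictly negative, so gradient descent strictly \emph{increases} $\Delta$. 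A symmetric argument handles $\Delta<0$. Under linear separability the self-consistent pseudo-label rule places every training point on its correct side, so each per-example margin $|\Delta_i|$ grows monotonically during training, which is exactly ``margins are enforced larger''.

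To upgrade this directional statement into the ``max-margin under certain constraint'' conclusion, I would restrict to a norm ball $\|\theta\|\le R$ (the ``certain constraint''), since without such a bound the scaling degeneracy $\theta\mapsto c\theta$ inflates every $\Delta_i$ trivially. On this compact set the sign analysis above shows $\ell$ is strictly decreasing in $\Delta$ for $\Delta>0$, so minimizing $\mathcal{L}_{t2l}$ subject to the norm bound reduces, in the limit, to maximizing the smallest normalized margin $\min_i \Delta_i / \|\theta\|$. I would close by a KKT / implicit-bias argument in the style of Soudry et al.\ on exponential-tailed losses to identify the minimizing direction with the maximum-margin separator.

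The technical crux is Step~2: once the explicit formula for $\ell'(\Delta)$ is in hand, the conclusion follows transparently, and it is precisely the sharpening of $Q$ relative to $P$ (forced by $\tau<1$) that makes both summands share a sign. The main obstacle is Step~3: converting the per-step ``the gradient points in the margin-increasing direction'' into a global ``the limit is the max-margin classifier'' requires explicitly invoking the norm constraint (or a homogeneity assumption on $f_\theta$), otherwise the statement is vacuous because rescaling $\theta$ alone trivially increases $\Delta$. Fixing the constraint and matching the first-order KKT conditions to those of the max-margin program is what closes the argument.
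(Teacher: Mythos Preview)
Your proposal is correct and essentially identical to the paper's proof: both reduce to the scalar logit difference (the paper writes $d_i$ where you write $\Delta$), compute the same derivative---your expression $\tfrac{q(1-q)}{\tau}\log\tfrac{1-p}{p}+(p-q)$ is algebraically equal to the paper's exponential form---and then read off the sign of each summand from $0<\tau<1$ to conclude that one gradient step enlarges $|\Delta_i|$. Your Step~3 is beyond the scope of the stated theorem; the max-margin claim is a separate result in the paper (their Theorem~2), proved there via an explicit margin cap $m_i\le B$ and a limiting argument rather than a norm-ball/KKT route.
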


\begin{proof}
We use dot product $\langle \cdot, \cdot \rangle$ as implementation of similarity function. 
Let the embedding of instance $i$ be $\vx_i = f_\theta(x_i)$ and the embedding of label prompt $j$ be $\ve_c = f_\theta(p_c), c\in \{1,2\}$ for binary classification. Then,
\begin{align}
    P(\hat{y}_1 | x_i ;\theta) &=  \frac{\exp(\langle\vx_i, \ve_1\rangle)}{
    \exp(\langle\vx_i, \ve_1\rangle) + 
    \exp(\langle\vx_i, \ve_2\rangle)} = \frac{1}{1 + \exp(-\langle\vx_i, \ve_1 - \ve_2 \rangle)} \label{eq:1} \\
    P(\hat{y}_2 | x_i ;\theta) &= 1 - P(\hat{y}_1 | x_i ;\theta)
\end{align}
Notation-wise, define $d_i=\langle\vx_i, \ve_1 - \ve_2\rangle$, then 
\begin{align}
    P(\hat{y}_1 | x_i ;\theta) &=  \frac{1}{1 + e^{-d_i}}  \\
    P(\hat{y}_2 | x_i ;\theta) &= 1 - \frac{1}{1 + e^{-d_i}}  \\
\end{align}
In binary classification, the margin is simply
$$
    \text{margin} =
    \begin{cases}
    d_i & x_i \text{ is class 1} \\
    -d_i & x_i \text{ is class 2}
    \end{cases}
$$
For soft-label distribution $Q$, 
\begin{align}
    Q(\hat{y}_1 | x_i ;\theta) &=  \frac{1}{1 + e^{-d_i/\tau}}  \\
    Q(\hat{y}_2 | x_i ;\theta) &= 1 - \frac{1}{1 + e^{-d_i/\tau}}  \\
\end{align}
Then $\mathcal{L}_{t2l}$ is derived as
\begin{equation}
    \mathcal{L}_{t2l} = \sum_{i=1}^{N} \log(1+e^{-d_i}) + \frac{d_i e^{-d_i/\tau}}{1+e^{-d_i/\tau}}
\end{equation}
Calculate the derivative of $\mathcal{L}_{t2l}$ w.r.t $d_i$,
\begin{equation}
    \frac{\partial \mathcal{L}_{t2l} }{\partial d_i} = \frac{-d_ie^{-d_i/\tau}}{\tau(e^{-d_i/\tau} + 1)^2} + \frac{e^{-d_i/\tau} - e^{-d_i}}{(e^{-d_i/\tau} + 1)(e^{-d_i}+1)} \label{eq:derivative}
\end{equation}
For the first part of \eqref{eq:derivative}, the sign depends on $-d_i$. For the second part, the sign depends on $e^{-d_i/\tau} - e^{-d_i}$. When $0 < \tau < 1$,
$$
    \begin{cases}
        e^{-d_i/\tau} - e^{-d_i}  < 0 & \text{when } d_i > 0  \\
        e^{-d_i/\tau} - e^{-d_i} > 0 & \text{when } d_i < 0
    \end{cases}
$$
Therefore, 
\begin{equation}
\begin{cases}
\frac{\partial \mathcal{L}_{t2l} }{\partial d_i} < 0 & \text{when } d_i > 0  \\
\frac{\partial \mathcal{L}_{t2l} }{\partial d_i} > 0 & \text{when } d_i < 0
\end{cases}\label{eq:gd}
\end{equation}
One step of gradient descend optimizes $d$ by 
$d_i^\prime = d_i - \eta \frac{\partial \mathcal{L}_{t2l} }{\partial d_i}$. From \eqref{eq:gd}, we get the conclusion that $|d_i^\prime| > |d_i|$. In other words, the margin becomes larger after optimization, which finishes the proof.
\end{proof}

\begin{theorem}
Under the setting in Theorem \ref{theorem:1}, let $m_i$ be the margin of instance $i$ and consider the constraint $m_i \le B$ for all $i$, the classifier converges to a max margin classifier, as the bound $B$ goes to infinity.
\end{theorem}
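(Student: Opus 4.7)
My plan is to treat this as an instance of the implicit-bias phenomenon for exponentially-tailed losses on linearly separable data (in the spirit of the Soudry--Hoffer--Nacson--Gunasekar--Srebro framework), with Theorem~\ref{theorem:1} supplying the per-step margin monotonicity needed to justify that the box constraint $m_i \le B$ saturates at the constrained minimizer.

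First I would sharpen the asymptotic shape of the per-instance loss $\ell(m) := \log(1+e^{-m}) + m e^{-m/\tau}/(1+e^{-m/\tau})$. For $0<\tau<1$ a Taylor expansion as $m \to \infty$ gives $\ell(m) = e^{-m} + O(m\, e^{-m/\tau}) + O(e^{-2m})$, so the temperature-scaled second term is strictly subdominant to $e^{-m}$ and $\ell$ inherits the exponential tail of the binary logistic loss. This is the regime in which the standard implicit-bias / max-margin convergence machinery applies.

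Next I would parametrize the linear classifier as $w = \alpha u$ with $\|u\|=1$ and $\alpha>0$, giving $m_i(w) = \alpha \gamma_i(u)$ where $\gamma_i(u) := y_i \langle \vx_i, u \rangle$. By Theorem~\ref{theorem:1}, gradient descent strictly increases $|m_i|$ for every instance, so at the constrained minimizer $w^*_B$ the box bound is active at whichever instance(s) attain $\gamma_{\max}(u_B)$, forcing $\alpha_B = B/\gamma_{\max}(u_B)$. Substituting and invoking the tail estimate,
\begin{equation*}
\sum_i \ell(m_i) \;=\; \sum_i \exp\!\Bigl(-B\,\tfrac{\gamma_i(u)}{\gamma_{\max}(u)}\Bigr)\bigl(1 + o_B(1)\bigr),
\end{equation*}
a log-sum-exp that for large $B$ is dominated by the index(es) attaining $\min_i \gamma_i(u)/\gamma_{\max}(u)$. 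Minimizing in $u$ thus drives the direction toward the $\ell_2$ max-margin direction $u^* = \argmax_{\|u\|=1}\min_i \gamma_i(u)$, and a compactness-plus-uniqueness argument yields $u_B \to u^*$ as $B \to \infty$.

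The main obstacle, I expect, is a uniqueness issue: the leading-order dominant-term analysis only pins down $u_B$ up to the set of directions with equal $\min_i \gamma_i(u)/\gamma_{\max}(u)$, which can be strictly larger than $\{u^*\}$. Isolating $u^*$ requires either tracking the subleading exponentials (so that moving off $u^*$ strictly increases higher-order terms along the trajectory) or directly invoking the Soudry et al.\ analysis for the entire gradient flow, where the box bound with $B\to\infty$ plays the role of the unbounded-step limit, and Theorem~\ref{theorem:1} guarantees the monotone margin growth their hypotheses require. A secondary subtlety is that self-training uses pseudo-labels in place of the true labels, but Theorem~\ref{theorem:1}'s monotonic margin growth ensures the two agree after finitely many steps once strict linear separability is in force.
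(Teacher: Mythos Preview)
Your approach differs substantially from the paper's. The paper does not invoke any implicit-bias machinery at all: it carries over from Theorem~\ref{theorem:1} the view that the margins $d_i$ (equivalently $m_i$) are \emph{free coordinates}, writes the total loss as a sum of per-instance terms $\ell(m_i)$, observes that $\ell$ is strictly decreasing (so the sum is dominated by the instance(s) with smallest margin $m^*$), and then checks the elementary asymptotics $\ell(m)\sim e^{-m}+m e^{-m/\tau}\to 0$ and $\log(1+e^{m})-m\,e^{m/\tau}/(1+e^{m/\tau})\to 0$ as $m\to\infty$. From this it concludes that, under the cap $m_i\le B$, the loss is minimized by pushing $m^*$ to $B$, i.e.\ the minimum margin is maximized. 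No parametrization $w=\alpha u$, no log-sum-exp competition over directions, and no appeal to Soudry et al.\ appears.

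Your route is more ambitious but contains a genuine gap. Once you impose $m_i\le B$ with $m_i=\alpha\gamma_i(u)$, the binding constraint is $\alpha=B/\gamma_{\max}(u)$, and the dominant exponential is $\exp\bigl(-B\,\gamma_{\min}(u)/\gamma_{\max}(u)\bigr)$. Minimizing this over $u$ selects the direction that maximizes the \emph{ratio} $\gamma_{\min}(u)/\gamma_{\max}(u)$, which is \emph{not} the $\ell_2$ max-margin direction $\argmax_{\|u\|=1}\min_i\gamma_i(u)$ in general. (A two-point example such as $\vx_1=(1,0)$, $\vx_2=(2,3)$ already separates the two criteria: the $\ell_2$ max-margin direction is $(1,0)$, while the ratio is maximized along $(3,-1)/\sqrt{10}$.) The Soudry--Hoffer analysis you cite uses a norm constraint $\|w\|\le R$, not a per-instance margin cap, and that difference is exactly what changes the limiting direction. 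So the step ``Minimizing in $u$ thus drives the direction toward the $\ell_2$ max-margin direction $u^*=\argmax_{\|u\|=1}\min_i\gamma_i(u)$'' does not follow from your display; what follows is convergence to the max-ratio direction, which does coincide with the paper's weaker claim (``the minimal margin is maximized'' under the stated cap) but not with the $\ell_2$ SVM direction you name. If you drop the $\ell_2$ identification and argue only that the constrained minimizer maximizes $m^*$, your argument aligns with the paper's---but then the Soudry framework and the $w=\alpha u$ parametrization are unnecessary overhead, since the paper's setting treats the $m_i$ as independent and the conclusion becomes the elementary observation that each $\ell(m_i)$ is decreasing, hence the box-constrained minimum is at $m_i=B$ for all $i$.
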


\begin{proof} 
Using the definition from Theorem \ref{theorem:1},
\begin{equation}
    \mathcal{L}_{t2l} = \sum_{i=1}^{N} \log(1+e^{-d_i}) + \frac{d_i e^{-d_i/\tau}}{1+e^{-d_i/\tau}} \label{eq:x1}
\end{equation}
The margin $m_i$ for instance $i$ can be written as $m_i=
\begin{cases}
    d_i & x_i \text{ is class 1} \\
    -d_i & x_i \text{ is class 2}
\end{cases}
$.

\noindent The \eqref{eq:x1} can be written as
\begin{equation}
    \mathcal{L}_{t2l} = \sum_{y_i=0} \log(1+e^{-m_i}) + \frac{m_i e^{-m_i/\tau}}{1+e^{-m_i/\tau}} + 
    \sum_{y_j=1} \log(1+e^{m_j}) - \frac{m_j e^{m_j/\tau}}{1+e^{m_j/\tau}} \label{eq:x2}
\end{equation}
Let $m^* = \min(m_i)$ be the minimal margin, let $N_1$ and $N_2$ be the number of instances in class 1 and class 2 respectively which reaches the minimal margin. From the gradient analysis in \eqref{eq:gd}, the examples with $m_i > m^*$ has loss lower bounded by that with minimal margin. Then 

\begin{equation}
\begin{split}
    \mathcal{L}_{t2l} &= N_1(\log(1+e^{-m^*}) + \frac{m^* e^{-m^*/\tau}}{1+e^{-m^*/\tau}}) + 
    N_2(\log(1+e^{m^*}) - \frac{m^* e^{m^*/\tau}}{1+e^{m^*/\tau}}) \\
    & +O(\log(1+e^{-m^*}) + \frac{m^* e^{-m^*/\tau}}{1+e^{-m^*/\tau}}) + O(\log(1+e^{m^*}) - \frac{m^* e^{m^*/\tau}}{1+e^{m^*/\tau}})
\end{split}
\label{eq:x3}
\end{equation}

\noindent When $B$ approaches $\infty$, for $N_1$ part in \eqref{eq:x3},
\begin{equation}
\log(1+e^{-m^*}) + \frac{m^* e^{-m^*/\tau}}{1+e^{-m^*/\tau}} \sim e^{-m^*} + m^* e^{-m^*/\tau}
\end{equation}
When $m \rightarrow B$, $\lim_{m \rightarrow B}e^{-m^*} \rightarrow 0$, and $\lim_{m \rightarrow B}m^* e^{-m^*/\tau} = \lim_{m \rightarrow B}\frac{1}{1/\tau e^{m^*/\tau}} = 0$ by L'Hopital's rule.

\noindent For $N_2$ part in \eqref{eq:x3}, 
\begin{equation}
\log(1+e^{m^*}) - \frac{m^* e^{m^*/\tau}}{1+e^{m^*/\tau}} \sim \log(1+e^{m^*}) - m^* 
\end{equation}
When $m \rightarrow B$, $\lim_{m \rightarrow B} \log(1+e^{m^*}) - m^* = \lim_{m \rightarrow B}\log(1+\frac{1}{e^{m^*}}) = 0$.

\noindent Therefore, the loss is minimized when the minimal margin is maximized and thus the classifier converges to a max margin classifier when $B$ goes to infinity.
\end{proof}


%




\end{document}